\documentclass[3p]{sig-alternate-per-modified}

\usepackage[utf8]{inputenc}
\usepackage[english]{babel}
\usepackage{enumerate}
\usepackage{graphicx}
\usepackage{hhline}
\usepackage{ dsfont }
\usepackage{float}
\usepackage{natbib}
\usepackage{graphicx}
\usepackage{subcaption}
\usepackage{amsmath,amsfonts,amssymb,bm}
\usepackage{amsthm}
\usepackage{xfrac} % door Jasper
\usepackage{scalefnt}

\usepackage{tikz}
% \usetikzlibrary{calc, positioning, fit}
\usepackage{pgfplots}
\pgfplotsset{compat=1.18}

% \usetikzlibrary{external}
% \tikzexternalize[prefix=tikz-cache/] %  , optimize=true
% \tikzsetfigurename{tikzfig}

\DeclareMathOperator{\diag}{diag}
\DeclareMathOperator{\PBE}{PBE}

\newtheorem{mythm}{Theorem}
\newtheorem{mylem}{Lemma}
\newtheorem{mycor}{Corollary}

\theoremstyle{definition}
\newtheorem{myremark}{Remark}

% \journal{Performance Evaluation}

\begin{document}

% \begin{frontmatter}

\title{Convergence of off-policy TD(0) with linear function approximation for reversible Markov chains}

\author{Maik Overmars, Jasper Goseling, Richard J. Boucherie \\
\affaddr{Stochastic Operations Research, University of Twente} \\
 \normalsize m.g.overmars@utwente.nl, j.goseling@utwente.nl, r.j.boucherie@utwente.nl}

% % Author affiliation
% \affiliation{organization={University of Twente},%Department and Organization
%             addressline={}, 
%             city={},
%             postcode={}, 
%             state={},
%             country={}}

% % Abstract

\maketitle

\begin{abstract}
We study the convergence of off-policy TD(0) with linear function approximation when used to approximate the expected discounted reward in a Markov chain. 
It is well known that the combination of off-policy learning and function approximation can lead to divergence of the algorithm.
Existing results for this setting modify the algorithm, for instance by reweighing the updates using importance sampling. This establishes convergence at the expense of additional complexity.
In contrast, our approach is to analyse the standard algorithm, but to restrict our attention to the class of reversible Markov chains. We demonstrate convergence under this mild reversibility condition on the structure of the chain, which in many applications can be assumed using domain knowledge.
In particular, we establish a convergence guarantee under an upper bound on the discount factor in terms of the difference between the on-policy and off-policy process. 
This improves upon known results in the literature that state that convergence holds for a sufficiently small discount factor by establishing an explicit bound.
Convergence is with probability one and achieves projected Bellman error equal to zero. To obtain these results, we adapt the stochastic approximation framework that was used by~\citet{tsitsiklis1997analysis} for the on-policy case, to the off-policy case.
We illustrate our results using different types of reversible Markov chains, such as one-dimensional random walks and random walks on a weighted graph.
\end{abstract}

% %% Keywords
% \begin{keywords}
% %% keywords here, in the form: keyword \sep keyword
% Temporal-difference learning, Off-policy learning, Linear \\function approximation, Convergence, Reversible Markov chains
% %% PACS codes here, in the form: \PACS code \sep code

% \end{keywords}

% \end{frontmatter}

%% Use \section commands to start a section
\section{Introduction}
Temporal-difference learning (TD learning)~\citep{sutton1988learning} is a reinforcement learning algorithm that aims to predict the expected reward of a dynamical system.
The convergence of TD learning has been well studied, for instance by considering different ways to approximate the value function that represents the predicted reward, and by using either on-policy or off-policy sampling of states.
We know that in the tabular case, on-policy TD learning converges to the optimal prediction~\citep{sutton1988learning,dayan1994td,jaakkola1994convergence}.
Under linear value function approximation, on-policy TD learning algorithm still converges. The point of convergence is no longer optimal, but does have a projected Bellman error (PBE) of zero~\citep{tsitsiklis1997analysis,dayan1992convergence,tsitsiklis1994asynchronous,tadic2001convergence}.
However, off-policy learning with function approximation may diverge, as shown through examples by~\citet{baird1995residual} and \citet{tsitsiklis1994asynchronous}.

All of the above work considers the convergence of TD learning for a general input, i.e., for any possible Markov chain.
In this work, by restricting our attention to the class of reversible Markov chains, we are able to prove convergence of the off-policy TD(0) algorithm under linear function approximation. 
The class of reversible Markov chains covers many application areas, such as queueing theory~\citep{allen2014probability}, statistical mechanics~\citep{davidson2013statistical}, and Bayesian statistics~\citep{bolstad2016introduction}.  
We show that off-policy TD(0) with linear function approximation converges whenever the on-policy and off-policy chain are reversible and have the same structure, under a condition on the discount factor.
In particular, the difference between the on-policy and off-policy chain imposes an upper bound on the discount factor.
Finally, we show in Remark \ref{rem:reversible} that our results also apply to a more general setting, where the reversibility restriction is not necessary.

The stability of TD(0) learning is commonly analysed by considering the update rule in general form
\begin{equation}
    w_{t+1} = w_t + \alpha_t (A(X_t)w_t + b(X_t)),
\end{equation}
where $X_t$ is a Markov chain describing the sampling process, $w_t$ is the parameter vector that is learned, $\alpha_t$ is the step size, and $A(X_t)$ and $b(X_t)$ contain the details of the update rule.
Also, define matrix $A$ as $A(X_t)$ for $X_t$ in steady-state.
The core requirement for convergence is that matrix $A$ is negative definite~\citep{tsitsiklis1997analysis,dayan1992convergence,sutton2016emphatic,sutton2018reinforcement}.
Matrix $A$ is negative definite if and only if $x^T A x < 0$ for all $x \neq 0$.
\citet{tsitsiklis1997analysis} showed that this matrix is always negative definite for on-policy TD learning with linear function approximation.
We generalise their result by showing that for reversible Markov chains, matrix $A$ is negative definite for off-policy TD learning with linear function approximation.
We show that $A=\Phi^T(\gamma \hat{Q}P - \hat{Q})\Phi$, where the elements of this matrix are the linear features $\Phi$, discount factor $\gamma$, off-policy distribution $\hat{Q}$, and the transition probabilities $P$ corresponding to the original chain.
If the on-policy and off-policy processes are very different, then the difference between $\hat{Q}P$ and $\hat{Q}$ can be large, and \(A\) may not be negative definite.
Decreasing the discount factor $\gamma$ can mitigate this effect, since $A$ always becomes negative definite when $\gamma$ approaches zero.
In this paper we address the open question as to how small the discount factor should be to achieve convergence.
% Although $A$ always becomes negative definite for small enough $\gamma$, we establish an exact upper bound on $\gamma$ that depends on the size of the perturbation.

Our main contribution is that we establish convergence with probability one of off-policy TD(0) with linear function approximation for the class of reversible Markov chains.
Our analysis builds on the work by \citet{tsitsiklis1997analysis}, and extends the convergence of on-policy learning to the off-policy case.
We model the Markov chain of the off-policy process as a perturbation of the original chain, where this perturbation describes the difference between $\hat{Q}P$ and $\hat{Q}$.
We derive an exact upper bound on the discount factor $\gamma$ in terms of this perturbation, and we show that the algorithm converges when this upper bound is satisfied. 
Larger perturbations imply that the discount factor should be chosen small, while a large discount factor limits the size of the perturbation.
The point of convergence is the fixed point of the projected Bellman operator, and this point has a projected Bellman error (PBE) of zero.
Finally, we show how to apply these results to common classes of reversible Markov chains, such as one-dimensional random walks, and random walks on a weighted graph.
 
In the literature other approaches have been introduced to deal with the divergence of off-policy TD learning. Some convergence results are available for the off-policy linear function approximation case, but these rely on modifying the algorithm.
Following his example of divergence,~\citet{baird1995residual} suggested residual algorithms, but these are slow to converge and require double samples. Similarly, least-squares temporal-difference (LSTD) as proposed by \citet{nedic2003least} is stable but has quadratic per-step complexity in the number of parameters.
A similar approach is used by gradient TD algorithms, like gradient temporal-difference (GTD) methods by~\citet{sutton2008convergent} and linear TD with gradient correction (TDC) by~\citet{sutton2009fast}, which perform true gradient descent in the Bellman error. 
A generalization of these algorithms to non-linear function approximation was presented by~\citet{maei2009convergent}.
These algorithms converge and do not encounter the double sample issue, but they require a second set of parameters with a second step size, and thus are more complex to implement.
Importance sampling techniques~\citep{sutton2016emphatic,precup2001off} have also been used to stabilise the algorithm by reweighing the updates, but these methods can have high variance depending on the setting. 
On the other hand, standard TD learning works well in many settings, has low complexity, and is easy to implement. Based on the current work, if reversibility can be assumed, the standard algorithm is guaranteed to converge. 
Also, our models and analysis opens the way to obtain bounds on the convergence rate similarly to~\citet{srikant2019finite}.

In Section \ref{sec:prelims} we specify the Markov chain model that is studied and the TD learning algorithm including linear function approximation and off-policy learning.
Section \ref{sec:main_result} contains the main convergence result of this paper.
In Section \ref{sec:example_processes} we apply the result to two types of reversible Markov chains: one-dimensional random walks and random walks on a weighted graph.
Finally, Section \ref{sec:conclusion} contains the conclusion, discussion and outlook for future work.

\section{Model formulation} \label{sec:prelims}
In this section we specify the Markov chain model that we study as well as off-policy TD learning with linear function approximation. Moreover, we connect this to a stochastic approximation result and define the matrix $A$, which we will use in our analysis to establish convergence conditions. Regarding notation: Unless otherwise specified, all vectors are column vectors, and we will interchangeably use subscripts and parentheses to denote elements of vectors. Also, \(\|v\|\) denotes the Euclidean norm of vector \(v\) and \(\|A\|\) denotes the corresponding induced norm for matrix \(A\).

We consider a \emph{reversible}, irreducible and aperiodic Markov chain $\{S_t \in \mathcal{S} \ | \ t=0,1,\ldots\}$ with finite state space $\mathcal{S}=\{1,\ldots,n\}$~\citep{aldous-fill-2014}.
The transition probability matrix is denoted by $P\in \mathbb{R}^{n\times n}$, where entry $p_{ij}$ gives the probability of a transition from state $i$ to state $j$. Let $q\in \mathbb{R}^n$ be the row vector that denotes the unique steady-state probability distribution satisfying $q=qP$. Let $Q=\diag(q)$, the diagonal matrix with the elements of $q$ on the diagonal. 
Markov chain $S_t$ is reversible if and only if $q_i p_{ij} = q_j p_{ji}$ for any pair of states $i,j\in\mathcal{S}$.
This implies that $p_{ij}>0\Longleftrightarrow p_{ji}>0$, such that we can define the \textit{transition ratio}
\begin{equation} \label{eq:rate}
    \rho_{ij} := \frac{p_{ij}}{p_{ji}}.
\end{equation}
for any $i,j\in S$.
As a result, $\rho_{ji}=\rho_{ij}^{-1}$ exists and we can write
\begin{equation} \label{eq:distr_rate}
    q_j = q_i \rho_{ij}.
\end{equation}

We denote by $r\in \mathbb{R}^n$ the reward vector over the states, where $r(i)$ gives the reward for state $i$.
% The rewards are discounted by discount factor $\gamma \in (0,1)$.
Our interest is in the expected sum of discounted rewards, defined through the value function $V: \mathcal{S}\xrightarrow{}\mathbb{R}$, with $V(i)$ the value when starting in state $i$, i.e.,
\begin{equation} \label{eq:value_func}
    V(i) := \mathbb{E}\left[\sum_{t=0}^\infty \gamma^t r(S_t) | S_0 = i\right],
\end{equation}
where $\gamma \in (0,1)$ is the discount factor. We aim to approximate \(V\) using linear function approximation. In particular, let
\begin{equation}
    V_w(i) := \sum_{k=1}^K w(k) \phi_k(i),
\end{equation}
where $\phi_k\in\mathbb{R}^n$ for $k=1,\ldots,K$ are the feature vectors, $w\in \mathbb{R}^K$ are the weights, and $i\in\mathcal{S}$. We define $\Phi\in \mathbb{R}^{n \times K}$ as the matrix where column $k$ corresponds to vector $\phi_k$, such that we can write
\begin{equation}
    V_w = \Phi w.
\end{equation}
We assume that feature matrix $\Phi$ has full column rank, i.e., the basis vectors $\phi_k$, $k=1,\ldots,K$, are linearly independent. 
We will use the notation $\phi(s) = (\phi_1(s), \phi_2(s), \ldots, \phi_K(s))$, \(s\in\mathcal{S}\), i.e., \(\phi(s)\) is the transpose of a row in \(\Phi\).

Next, we introduce the off-policy TD(0) algorithm. This algorithm iteratively updates the weights in 
\(w\) with the aim of approximating $V$ by \(V_w\). The update rule is
\begin{equation} \label{eq:td_update}
    w_{t+1} = w_t + \alpha_t d_t \phi(\hat{S}_t),
\end{equation}
where \(\alpha_t>0\) is the step size, \(d_t\) is the temporal difference which will be specified below, and \(\hat{S}_t\in\mathcal{S}\) is a state that is sampled at time \(t\). The off-policy property means that the states \(\hat{S}_t\) are sampled from a different Markov chain than $\{S_t \in \mathcal{S} \ | \ t=0,1,\ldots\}$, the Markov chain that the algorithm aims to estimate \(V\) for. In particular, we have reversible, irreducible and aperiodic Markov chain $\{\hat S_t \in \mathcal{S} \ | \ t=0,1,\ldots\}$ with transition probability matrix $\hat{P}$, steady-state probability distribution $\hat{q}$ and $\hat{Q}=\diag(\hat{q})$. We assume that
\begin{equation}
    \hat{p}_{ij}>0 \Longleftrightarrow p_{ij}>0,
\end{equation}
such that both Markov chains have the same structure. We will refer to $\{S_t \in \mathcal{S} \ | \ t=0,1,\ldots\}$ and $\{\hat S_t \in \mathcal{S} \ | \ t=0,1,\ldots\}$ as the original and the perturbed Markov chain, respectively. The temporal difference \(d_t\) is based on a one-step transition in the original Markov chain when starting from state \(\hat S_t\), i.e. let \(S_t'\) be defined as
\begin{equation}
    \mathbb{P}(S_t'=j|\hat{S}_t=i) = p_{ij}.
\end{equation} 
The temporal difference is then defined as
\begin{align}
    d_t &:= r(\hat{S}_t) + \gamma V_{w_t}(S_t') - V_{w_t}(\hat{S}_t), \\
    &= r(\hat{S}_t) + \gamma \phi^T(S_t') w_t - \phi^T(\hat{S}_t) w_t.
\end{align}

We assume that the step sizes $\alpha_t$ are positive, nonincreasing, and satisfy the Robbins-Monro conditions~\citep{robbins1951stochastic}
\begin{align} \label{eq:step_size}
    &\sum_{t=0}^\infty \alpha_t = \infty, \\
    &\sum_{t=0}^\infty \alpha^2_t < \infty.
\end{align}

By rewriting~\eqref{eq:td_update} as
\begin{equation} \label{eq:td_update_in_A_b}
    w_{t+1} = w_t + \alpha_t \left(A(\hat S_t, S_t') w_t + b(\hat S_t, S_t') \right),
\end{equation}
with
\begin{equation} \label{eq:transient_a}
    A(\hat S_t, S_t') := \phi(\hat{S}_t)\left(\gamma \phi^T(S_t') - \phi^T(\hat{S}_t) \right),
\end{equation}
and
\begin{equation} \label{eq:transient_b}
    b(\hat S_t, S_t') := \phi(\hat{S}_t)r(\hat{S}_t),
\end{equation}
we see that it is an instance of a stochastic approximation algorithm that is well studied. In particular, we have the following convergence result.
\begin{mythm} [Theorem 17 from \citet{benveniste2012adaptive}] \label{thm:stochastic_approx}
    Consider an iterative algorithm of the form
    \begin{equation}
        w_{t+1} = w_t + \alpha_t (A(X_t)w_t + b(X_t)),
    \end{equation}
    where
    \begin{enumerate}
        \item the step size sequence $\alpha_t$ is positive, nonincreasing, and satisfies $\sum_{t=0}^\infty \alpha_t = \infty$ and $\sum_{t=0}^\infty \alpha_t^2 < \infty$;
        \item $X_t$ is a Markov process with unique steady-state distribution, and $X$ denotes the random variable distributed according to this steady-state distribution. 
        Furthermore, there exists a mapping $h:\mathcal{S}\xrightarrow{}\mathbb{R}$ satisfying the remaining conditions;
        \item $A(\cdot)$ and $b(\cdot)$ are matrix and vector valued functions, for which $A=\mathbb{E}[A(X)]$ and $b=\mathbb{E}[b(X)]$ are well defined and finite;
        \item matrix $A$ is negative definite;
        \item there exists constants $M$ and $g$ such that for all $x$
        \begin{equation}
            \sum_{t=0}^\infty ||\mathbb{E}[A(X_t)|X_0=x)-A|| \leq M (1+h^g(x)),
        \end{equation}
        and
        \begin{equation}
            \sum_{t=0}^\infty ||\mathbb{E}[b(X_t)|X_0=x)-b|| \leq M (1+h^g(x));
        \end{equation}
        \item for any $g>1$, there exists a constant $\mu_g$ such that for all $X, t$
        \begin{equation}
            \mathbb{E}[h^g(X_t)|X_0=x] \leq \mu_g (1+h^g(x)).
        \end{equation}
    \end{enumerate}
    Then, $w_t$ converges to $w^*$, where $w^*$ is the unique vector that satisfies $Aw^* + b = 0$.
\end{mythm}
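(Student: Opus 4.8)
The plan is to use the ODE method for stochastic approximation with Markovian, state-dependent noise. First I would associate with the iteration the ordinary differential equation $\dot w = Aw + b$. Since $A$ is negative definite it is invertible, so this linear ODE has the unique equilibrium $w^* = -A^{-1}b$, which is precisely the vector characterised in the statement. Moreover $U(w) := \tfrac12 \|w - w^*\|^2$ is a strict Lyapunov function for it, since
\begin{equation}
\dot U = (w - w^*)^T \dot w = (w - w^*)^T A (w - w^*) < 0 \qquad \text{for all } w \neq w^*,
\end{equation}
so $w^*$ is globally asymptotically stable. This fixes both the candidate limit and the Lyapunov function that the probabilistic argument will track.

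Next I would split the increment as
\begin{equation}
w_{t+1} - w_t = \alpha_t (A w_t + b) + \alpha_t \big( (A(X_t) - A) w_t + (b(X_t) - b) \big),
\end{equation}
isolating a mean-field term and a perturbation driven by $X_t$. The perturbation is \emph{state dependent} --- it is multiplied by $w_t$ --- so it is not a martingale difference and needs more care. This is where conditions~5 and~6 enter: they guarantee that the series
\begin{equation}
\hat A(x) := \sum_{k=0}^{\infty} \big( \mathbb{E}[A(X_k) \mid X_0 = x] - A \big), \qquad \hat b(x) := \sum_{k=0}^{\infty} \big( \mathbb{E}[b(X_k) \mid X_0 = x] - b \big)
\end{equation}
converge with growth $O(1 + h^g(x))$ and solve the Poisson equations $\hat A(x) - \mathbb{E}[\hat A(X_1) \mid X_0 = x] = A(x) - A$ and likewise for $\hat b$. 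Substituting $A(X_t) - A = \hat A(X_t) - \mathbb{E}[\hat A(X_{t+1}) \mid X_t]$ and applying summation by parts, the accumulated perturbation decomposes into a martingale with increments proportional to $\alpha_t \big( \hat A(X_{t+1}) - \mathbb{E}[\hat A(X_{t+1}) \mid X_t] \big)$, a term controlled by $\alpha_{t+1} - \alpha_t$, and vanishing boundary terms.

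The martingale term converges almost surely by the martingale convergence theorem: its predictable quadratic variation is summable because $\sum_t \alpha_t^2 < \infty$ and condition~6 yields a uniform bound on $\mathbb{E}[h^g(X_t)]$, hence on the second moments of $\hat A(X_{t+1})$ and $\hat b(X_{t+1})$; the remaining two terms are negligible since $\alpha_t$ is nonincreasing and square-summable and, on the event that $\{w_t\}$ is bounded, $w_{t+1}-w_t = O(\alpha_t)$. Feeding this into a standard stochastic-approximation convergence result (the Kushner--Clark / Benveniste--M\'etivier--Priouret framework), the perturbation is an asymptotically negligible disturbance, so the interpolated trajectory of $w_t$ is an asymptotic pseudotrajectory of $\dot w = Aw+b$, and global asymptotic stability of $w^*$ forces $w_t \to w^*$ almost surely. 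The step I expect to be the main obstacle is establishing a priori that $\{w_t\}$ remains bounded: the noise enters multiplicatively through $(A(X_t)-A)w_t$, so boundedness is not automatic and must be extracted --- typically by combining the Lyapunov function $U$ with the Poisson-equation estimates above in a stochastic Lyapunov argument, or via an expanding-truncation device --- again leaning on the polynomial moment control from conditions~5 and~6. Once boundedness is available, the remainder is routine.
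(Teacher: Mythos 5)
This statement is not proved in the paper at all: it is imported verbatim as Theorem~17 of Benveniste, M\'etivier and Priouret (in the special form used by Tsitsiklis and Van Roy, their Theorem~2), and the paper's work consists only of verifying its hypotheses for the TD(0) iteration. Your sketch reconstructs the architecture of that cited proof --- the ODE $\dot w = Aw+b$ with unique globally stable equilibrium $w^*=-A^{-1}b$ (negative definiteness of $A$, in the paper's non-symmetric sense $x^TAx<0$, makes $\tfrac12\|w-w^*\|^2$ a strict Lyapunov function), the Poisson-equation solutions $\hat A,\hat b$ built from conditions~5 and~6, the martingale-plus-remainder decomposition via summation by parts, and the appeal to the Kushner--Clark/BMP asymptotic-pseudotrajectory machinery. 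That is the right framework and matches the source the paper leans on.

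However, as a standalone proof there is a genuine gap, and you have named it yourself without closing it: almost-sure boundedness of $\{w_t\}$. Every quantitative step in your argument is conditioned on boundedness --- the quadratic-variation bound for the martingale term involves $\hat A(X_{t+1})w_t$, the $(\alpha_{t+1}-\alpha_t)$ and boundary terms are only controlled ``on the event that $\{w_t\}$ is bounded'', and the pseudotrajectory conclusion requires precompact trajectories. With multiplicative Markovian noise $(A(X_t)-A)w_t$ this stability question is precisely the hard technical core of the BMP theorem; gesturing at ``a stochastic Lyapunov argument or an expanding-truncation device'' does not discharge it, because the expanding-truncation route proves convergence of a modified (projected/restarted) algorithm unless one additionally shows truncations occur finitely often, and the stochastic Lyapunov route requires combining $U$ with the Poisson-equation correctors $\hat A,\hat b$ into a perturbed Lyapunov function and re-deriving supermartingale estimates --- none of which is sketched. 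So the honest comparison is: you have correctly outlined the surrounding machinery, but the one step that the paper's citation to Benveniste et al.\ is actually doing work for is the step your proposal defers.
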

Our statement of the above theorem is in fact, a special case of the more general Theorem 17 from~\citet{benveniste2012adaptive}. The above form appears in~\citet[Theorem 2]{tsitsiklis1997analysis}. Most of our analysis will deal with verifying the conditions of this theorem for our setting in which \(X_t=(\hat S_t, S_t')\). To set the stage for this, we define matrix $A$ and vector \(b\) as
\begin{equation} \label{eq:Ab_def}
    A := \Phi^T(\gamma\hat{Q}P - \hat{Q})\Phi,\quad\text{and}\quad b := \Phi^T \hat{Q} r.
\end{equation}
Crucial parts of our analysis will be to show that \(A=\mathbb{E}[A(\hat S, S')]\) and \(b=\mathbb{E}[b(\hat S, S')]\),
and to derive conditions under which \(A\) is negative definite, henceforth abbreviated as n.d.

In addition to convergence guarantees, we study the point of convergence \(w^*\). In particular, we will show that \(w^*\) minimises the projected Bellman error (PBE)~\citep{sutton2018reinforcement}, which we define next. In order to do so, consider the inner product $\langle x, y \rangle_{\hat{Q}} := x^T \hat{Q} y$, and the associated weighted Euclidean norm $||\cdot||_{\hat{Q}}:=\sqrt{\langle \cdot, \cdot \rangle_{\hat{Q}}}$ on the space $\{\Phi w | w\in\mathbb{R}^K\}$ of value function approximations given linear features $\Phi$. Now,
\begin{equation} \label{eq:pbe}
    \text{PBE}(w) := ||\Pi_{\hat{Q}} \mathcal{B}\left(\Phi w\right) - \Phi w||_{\hat{Q}},
\end{equation}
where \(\mathcal{B}\) is the Bellman operator that satisfies $V=\mathcal{B}V$, and \(\Pi_{\hat{Q}}\) is the projection operator from \(\mathbb{R}^n\) onto \(\{\Phi w | w\in\mathbb{R}^K\}\) with respect to the norm \(||\cdot||_{\hat{Q}}\).

\section{Main results} \label{sec:main_result}
In this section we state the main result of this paper, that is the convergence of temporal-difference learning applied on reversible Markov chains with linear function approximation and off-policy learning.
\begin{mythm} \label{thm:main}
    Let $c \geq 1$ satisfy
    \begin{equation} \label{eq:perturbation1}
        \frac{1}{c} \rho_{ij} \leq \hat{\rho}_{ij} \leq  c \rho_{ij},
    \end{equation}
    for all $i,j\in\mathcal{S}$ for which \(p_{ij}>0\), and let
    \begin{equation} \label{eq:gamma_cond1}
        \gamma < \frac{2}{c+1}.
    \end{equation}
    Then $w_t$ converges to $w^*$ with probability 1, where $w^*$ is the unique vector that satisfies \(\PBE(w^*)=0\).
\end{mythm}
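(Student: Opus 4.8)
The plan is to verify the six conditions of Theorem~\ref{thm:stochastic_approx} with $X_t = (\hat S_t, S_t')$, and then to identify the fixed point $w^*$ as the zero of the PBE. Conditions 1, 2, 3 and 6 are essentially structural: condition 1 is our hypothesis on the step sizes; the pair process $(\hat S_t, S_t')$ inherits a unique steady-state distribution from the irreducible aperiodic chain $\hat S_t$, since $S_t'$ is a deterministic-kernel draw $p_{\hat S_t, \cdot}$; for condition 3 one computes $\mathbb{E}[A(\hat S, S')]$ and $\mathbb{E}[b(\hat S, S')]$ in steady state, which should reproduce exactly $A = \Phi^T(\gamma \hat Q P - \hat Q)\Phi$ and $b = \Phi^T \hat Q r$ as defined in~\eqref{eq:Ab_def} — this is a short calculation summing $\hat q_i\, p_{ij}\, \phi(i)(\gamma\phi^T(j) - \phi^T(i))$ over $i,j$. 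Conditions 5 and 6 (the mixing-type and moment bounds) hold because the state space is finite: choose $h \equiv 1$, so $h^g = 1$ is trivially bounded, and geometric ergodicity of the finite chain gives exponential convergence of $\mathbb{E}[A(X_t)|X_0=x]$ to $A$, hence a summable bound, so these reduce to standard finite-state arguments as in~\citet{tsitsiklis1997analysis}.

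The heart of the argument — and the one genuinely new obstacle — is condition 4: showing $A$ is negative definite under the hypotheses $\frac{1}{c}\rho_{ij} \le \hat\rho_{ij} \le c\,\rho_{ij}$ and $\gamma < \frac{2}{c+1}$. The approach is to study the symmetric part of $A$, equivalently the quadratic form $x^T A x$ for $x = \Phi w$ ranging over the (full-dimensional, since $\Phi$ has full column rank) subspace $\{\Phi w\}$. Writing $v = \Phi w$, we need $\gamma\, v^T \hat Q P v < v^T \hat Q v$ for all $v \neq 0$ in this subspace. The key is to exploit reversibility: because both chains are reversible with the \emph{same support structure}, the transition ratios $\rho_{ij}$ link $\hat q$ and $q$ via~\eqref{eq:distr_rate}, and the matrix $\hat Q P$ can be symmetrized. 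I would bound the bilinear form $v^T\hat Q P v = \sum_{i,j} \hat q_i p_{ij} v_i v_j$ by pairing the $(i,j)$ and $(j,i)$ terms: using $\hat q_i p_{ij}$ versus $\hat q_j p_{ji}$ and the ratio $\hat q_i p_{ij}/(\hat q_j p_{ji}) = \hat\rho_{ij}\rho_{ij}^{-1} \cdot (\text{something controlled by } c)$ — more precisely, the perturbation bound~\eqref{eq:perturbation1} controls how far $\hat Q P$ is from a symmetric matrix, and an inequality of the form $ab \le \frac{1}{2}(\lambda a^2 + \lambda^{-1} b^2)$ applied termwise with $\lambda$ calibrated to $c$ should yield $v^T \hat Q P v \le \frac{c+1}{2}\, v^T \hat Q v$ (the constant $\frac{c+1}{2}$ being exactly what makes $\gamma < \frac{2}{c+1}$ the sharp threshold). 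Combining this with $\gamma < \frac{2}{c+1}$ gives $x^T A x < 0$.

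Once $A$ is n.d., Theorem~\ref{thm:stochastic_approx} delivers convergence w.p.\ 1 to the unique $w^*$ solving $A w^* + b = 0$. The final step is to check $\PBE(w^*) = 0$. For this I would unpack the definition~\eqref{eq:pbe}: the projected Bellman equation $\Phi w = \Pi_{\hat Q} \mathcal{B}(\Phi w)$ is equivalent, by the variational characterization of the $\hat Q$-orthogonal projection, to $\Phi^T \hat Q (\mathcal{B}(\Phi w) - \Phi w) = 0$; writing $\mathcal{B}(\Phi w) = r + \gamma P \Phi w$ this becomes $\Phi^T \hat Q (r + \gamma P \Phi w - \Phi w) = 0$, i.e.\ exactly $A w + b = 0$. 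Hence the unique solution $w^*$ of $A w^* + b = 0$ is the unique solution of the projected Bellman equation, so $\PBE(w^*) = 0$. I expect the negative-definiteness argument (condition 4) to be the main obstacle; the remaining conditions and the PBE identification are routine given the finite state space and the $A,b$ formulas already recorded in~\eqref{eq:Ab_def}.
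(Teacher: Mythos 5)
Your proposal is correct and follows essentially the same route as the paper: verify the conditions of Theorem~\ref{thm:stochastic_approx} for $X_t=(\hat S_t,S_t')$ (with $h\equiv 1$ and geometric ergodicity of the finite chain for Conditions 5--6), establish negative definiteness of $A$ from the ratio hypothesis, and identify $w^*$ with the unique zero of the projected Bellman error via $\Phi^T\hat Q(r+\gamma P\Phi w^*-\Phi w^*)=0$. Your direct quadratic-form bound $v^T\hat Q P v\le \tfrac{c+1}{2}\,v^T\hat Q v$ (symmetrization plus the termwise inequality, with the perturbed chain's reversibility giving $\hat q_j p_{ji}=\hat q_i\hat\rho_{ij}p_{ji}\le c\,\hat q_i p_{ij}$ and hence $\sum_j\hat q_j p_{ji}\le c\,\hat q_i$) is exactly the content of the paper's diagonal-dominance step in Lemma~\ref{lem:diag_dom} combined with the bound used in the proof of Theorem~\ref{thm:main}, so the argument is the same in substance.
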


\begin{myremark} \label{rem:reversible}
    Careful analysis of the proof of Theorem \ref{thm:main} shows that we do not need reversibility of the original Markov chain.
    % the result also holds when only the perturbed chain is reversible.
    In fact, the original chain only needs to satisfy $p_{ij} > 0$ iff $p_{ji} > 0$ for any states $i$ and $j$, such that $\rho_{ij}$ exists.
    % and we can apply \eqref{eq:perturbation1}.
    Thus, for any transition in the original Markov chain the reverse transition probability should also be positive. 
\end{myremark}

The factor $c$ determines the size of the perturbation, in other words how different the off-policy distribution is from the on-policy distribution.
The proof of Theorem~\ref{thm:main} is provided in Appendix \ref{app:proofthmmain}.
We do, however, present the main technical lemma that is used in the proof here. In particular, we give a sufficient condition for $A$ as defined in~\eqref{eq:Ab_def} to be n.d.
\begin{mylem} \label{lem:diag_dom}
    If 
    \begin{equation} \label{eq:diag_dom_condition}
        \gamma < \frac{2}{1+\frac{1}{\hat{q}_i}\sum_{j=1}^n \hat{q}_j p_{ji}}
    \end{equation}
    for all $i\in\mathcal{S}$, then matrix $A$ is n.d.
\end{mylem}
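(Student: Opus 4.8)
The plan is to prove the stronger statement that the real symmetric matrix
$N := (\gamma\hat{Q}P - \hat{Q}) + (\gamma\hat{Q}P - \hat{Q})^T$ is negative definite on all of $\mathbb{R}^n$ under the stated hypothesis; negative definiteness of $A$ then follows at once. Indeed, for any $x\neq 0$,
$x^T A x = (\Phi x)^T(\gamma\hat{Q}P-\hat{Q})(\Phi x) = \tfrac{1}{2}(\Phi x)^T N (\Phi x)$, since a scalar equals its own transpose, and $\Phi x\neq 0$ because $\Phi$ has full column rank; hence $N$ negative definite yields $x^T A x<0$.

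First I would write out the entries of $N$. Using $(\hat{Q}P)_{ij}=\hat{q}_i p_{ij}$ and $(P^T\hat{Q})_{ij}=p_{ji}\hat{q}_j$, one obtains $N_{ij}=\gamma(\hat{q}_i p_{ij}+\hat{q}_j p_{ji})$ for $i\neq j$ and $N_{ii}=2\hat{q}_i(\gamma p_{ii}-1)$. Since $\gamma<1$ and $p_{ii}\le 1$, every diagonal entry is strictly negative (here $\hat{q}_i>0$ by irreducibility of the perturbed chain), while every off-diagonal entry is nonnegative.

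Next I would check that the hypothesis $\gamma < 2/\bigl(1+\hat{q}_i^{-1}\sum_{j}\hat{q}_j p_{ji}\bigr)$ is precisely strict diagonal dominance of row $i$ of $N$. Writing
$|N_{ii}| - \sum_{j\neq i}|N_{ij}| = 2\hat{q}_i(1-\gamma p_{ii}) - \gamma\sum_{j\neq i}(\hat{q}_i p_{ij} + \hat{q}_j p_{ji})$ and substituting $\sum_{j\neq i}p_{ij}=1-p_{ii}$ and $\sum_{j\neq i}\hat{q}_j p_{ji}=\sum_{j}\hat{q}_j p_{ji}-\hat{q}_i p_{ii}$, the terms involving $p_{ii}$ cancel and this quantity simplifies to $2\hat{q}_i - \gamma\bigl(\hat{q}_i + \sum_{j}\hat{q}_j p_{ji}\bigr)$, which is positive for all $i$ exactly when the stated bound on $\gamma$ holds.

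Finally, I would invoke the Gershgorin circle theorem: a real symmetric matrix that is strictly diagonally dominant with negative diagonal entries has all eigenvalues contained in discs lying strictly in the open left half-plane, hence is negative definite; applying this to $N$ finishes the proof. I do not expect a genuine obstacle; the only things to watch are the bookkeeping in the diagonal-dominance computation and the fact that passing through symmetrisation and Gershgorin is sufficient but not necessary — acceptable here since the lemma only asserts sufficiency, and the slack it introduces is recovered elsewhere through reversibility (cf. Remark~\ref{rem:reversible}).
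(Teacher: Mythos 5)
Your proposal is correct and follows essentially the same route as the paper: the paper symmetrises $\gamma\hat{Q}P-\hat{Q}$ (working with $D=\hat Q-0.5\gamma(\hat QP+P^T\hat Q)$, i.e.\ $-\tfrac12 N$), verifies exactly the same strict diagonal-dominance inequality, and concludes via the Hermitian diagonal-dominance criterion of \citet[Corollary~7.2.3]{horn1985matrix}, which is the Gershgorin argument you invoke. The bookkeeping in your dominance computation matches the paper's, so there is nothing to fix.
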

The proof of the lemma is given in Appendix~\ref{app:proof_diag_dom} and is based on establishing diagonal dominance, see for instance \citet[Corollary 7.2.3]{horn1985matrix}.
Note, that this is a sufficient but not necessary condition for $A$ to be n.d., as a matrix being n.d. does not imply that it is also diagonally dominant. To illustrate~\eqref{eq:diag_dom_condition} consider the on-policy case where $\hat{Q}=Q$. There we have $\hat{q}_i = \sum_{j=1}^n \hat{q}_j p_{ji}$, and we can rewrite \eqref{eq:diag_dom_condition} as
\begin{align}
    \gamma < \frac{2}{1+\frac{1}{\hat{q}_i}\sum_{j=1}^n \hat{q}_j p_{ji}}
    = \frac{2}{1+\frac{1}{\hat{q}_i}\hat{q}_i}
    = 1,
\end{align}
which is always satisfied as $\gamma \in (0,1)$. Hence, we establish the result of \citet{tsitsiklis1997analysis} result that on-policy TD(0) learning with linear function approximation converges as a special case of Theorem~\ref{thm:main}. 

Though it is not stated explicitly, Lemma~\ref{lem:diag_dom} and its proof do not require reversibility of either the original or the perturbed Markov chain. 
However, Lemma~\ref{lem:diag_dom} seems to be most useful in the case of reversible Markov chains, where limited information about the original and the perturbed chain may provide convergence guarantees. 
One instance of this is Theorem~\ref{thm:main}, where we only require a bound the perturbation of the transition ratios. 
In Section~\ref{sec:example_processes} we give another application of Lemma~\ref{lem:diag_dom} that demonstrates that if more information about the process is available, tighter bounds on \(\gamma\) can be obtained.

\section{Applications} \label{sec:example_processes}
In this section, we first show how to apply Theorem \ref{thm:main} to the class of random walks on a weighted graph in the case when a bound on the perturbations of the weights is available. Next, we apply Theorem \ref{thm:main} to the class of one-dimensional random walks. Furthermore, using Lemma~\ref{lem:diag_dom} we show how to obtain better bounds if more information about the process is known.

\subsection{Random walk on a weighted graph}
Consider a random walk on a weighted graph \citep{aldous-fill-2014}.
We define the undirected, complete graph $G=(V,E)$ with vertices $V$ and edges $E$. Edge $\{i,j\}\in E$ for $i,j\in V$ has weight $u_{ij}>0$. Since edges are undirected we have \(u_{ij}=u_{ji}\).
We then consider the Markov chain with transition probabilities defined as
\begin{equation} \label{eq:graph_probs}
    p_{ij} = \frac{u_{ij}}{\sum_{k=1}^n u_{ik}}.
\end{equation}
The equilibrium distribution for this process equals
\begin{equation} \label{eq:graph_distr}
    q_i = \frac{\sum_{k=1}^n u_{ik}}{\sum_{\ell=1}^n \sum_{k=1}^n u_{\ell k}}.
\end{equation}
It can be easily shown that this process is reversible. 
Furthermore, every reversible Markov chain can be rewritten as a random walk on a weighted graph \citep{aldous-fill-2014}.
This is done by setting the weights as $u_{ij}=q_i p_{ij}$, so that detailed balance immediately holds using $u_{ij}=u_{ji}$.
In this way, we can apply our result to any reversible Markov chain by bounding the weights in a similar manner as shown below.
However, this requires knowledge of stationary distributions $q$ and $\hat{q}$ in order to define the weights.
As a result, it is easier for a general Markov chain to apply Theorem \ref{thm:main} by bounding the ratios of transition probabilities directly.

Now consider a perturbed process defined on the same graph \(G\) with weights \(\hat u_{ij}>0\). 
Moreover, suppose that the difference in weights between the original and the perturbed process can be bounded as
\begin{equation}
    \frac{1}{\delta}\leq\frac{\hat{u}_{ij}}{u_{ij}}\leq \delta,
\end{equation}
for all edges \(\{i,j\}\in E\), for some \(\delta\geq 1\). Using \(\hat u_{ij} = \hat u_{ji}\) and \(u_{ij} = u_{ji}\) we can write
\begin{align}
    \frac{\hat{p}_{ij}}{\hat p_{ji}}
        &= \frac{\sum_{k=1}^n \hat u_{jk}}{\sum_{k=1}^n \hat u_{ik}} \\
        &\leq \frac{\delta\sum_{k=1}^n u_{jk}}{\delta^{-1}\sum_{k=1}^n u_{ik}} \\
        &= \delta^2 \frac{{p}_{ij}}{p_{ji}}.
\end{align}
A similar argument can be made for the other direction, so that we have
\begin{equation}
    \frac{1}{\delta^2} \leq \frac{\hat{\rho}_{ij}}{\rho_{ij}} \leq \delta^2.    
\end{equation}
Hence, we can apply Theorem~\ref{thm:main} with \(c=\delta^2\). 
\begin{figure}
    \centering
    % \resizebox{\linewidth}{!}{%
    % \tikzsetnextfilename{tikzfig-numerical_results}
% \tikzpicturedependsonfile{./numerical_results.csv}
\begin{tikzpicture}
    \begin{semilogxaxis}[
      width=9cm,
    height=5cm,
      %xmode = log
      xlabel=\(\delta\),
      ylabel=\(\gamma\),
      font=\scriptsize,
      legend columns = 2,
      legend style={
            at={(0.5,-0.3)},
            anchor=north,
           font=\scriptsize,
      },
    legend image post style={mark indices={}} % To get legend markers when using 'mark indices'
    ]

    %%%%%%%%%%%%%%%%%
    %
    %
    %
    %%%%%%%%%%%%%%%%%

    \addplot[
        color=black,
        dashed,
        line width = 0.3mm,
        mark=o,
        mark options={solid},
        mark indices={5,15,25,35}
        ]
      table[
        x index=0, y index=5, col sep=comma, header=false
        ]
      {./numerical_results.csv};

    \addplot[
        color=black,
        dotted,
        line width = 0.3mm,
        mark=square,
        mark options={solid},
        mark indices={5,15,25,35}
        ]
      table[
        x index=0, y index=4, col sep=comma, header=false
        ]
      {./numerical_results.csv};

      \addplot[
        color=red,
        line width = 0.3mm,
        mark=square*,
        mark indices={5,15,25,35}
        ]
      table[
        x index=0, y index=2, col sep=comma, header=false
        ]
      {./numerical_results.csv};

    \addplot[
      color=blue,
      line width = 0.3mm,
      mark=*,
      mark indices={5,15,25,35}
      ]
    table[
      x index=0, y index=3, col sep=comma, header=false
      ]
    {./numerical_results.csv};

    \addplot[
        color=black,
        line width = 0.3mm,
        mark=triangle*,
        mark indices={10,30}
        % mark repeat = 9,
        % mark phase = 8
        ]
      table[
        x index=0, y index=1, col sep=comma, header=false
        ]
      {./numerical_results.csv};
    
    \legend{
        Cor.~\ref{cor:improved_bound} (\(\rho\rightarrow\infty\)),
        Cor.~\ref{cor:improved_bound} (\(\rho\downarrow 0\)),
        Cor.~\ref{cor:improved_bound} (\(\rho=2\)),
        Cor.~\ref{cor:improved_bound} (\(\rho=0.5\)),
        Thm.~\ref{thm:main}
    };

    \end{semilogxaxis}
    \end{tikzpicture}
    % }
    \caption{Upper bounds on $\gamma$ given by Theorem \ref{thm:main} and Corollary \ref{cor:improved_bound} for the simple random walk.} %  The x-axis has a logarithmic scale.
    \label{fig:bound_comparison}
\end{figure}
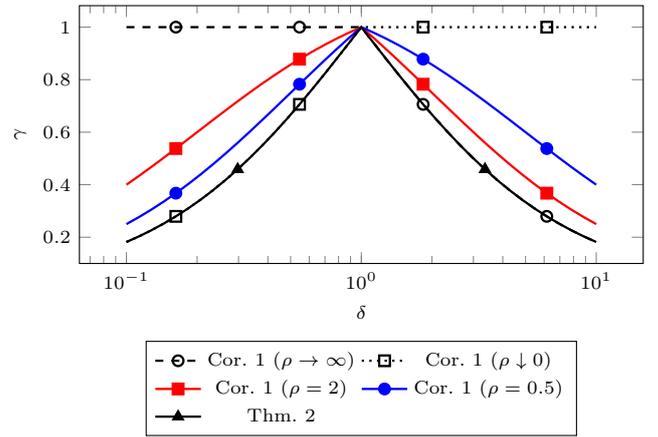

\subsection{One-dimensional random walk} \label{sec:oned_randomwalk}
Consider the case that the original and the perturbed chain are one-dimensional random walks on $n$ states, i.e.,
\(p_{ij}>0\) and \(\hat p_{ij}>0\) if and only if i) \(|i-j|=1\), or ii) \(i=j=1\), or iii) \(i=j=n\).
For these random walks it can be easily verified that detailed balance holds.
As a result, they are reversible and the stationary distributions can be shown to equal
\begin{equation} \label{eq:oned_steadystate}
    q_i = q_1 \prod_{j=1}^{i-1} \rho_{j, j+1}
\end{equation}
for $i=2,\ldots,n$, where \(q_1\) follows from normalization and where a similar expression holds for \(\hat q_i\).
Note, that the transitions ratios capture the drift of the random walk to the left or right side of the state space. A transition ratio \(\rho_{i,i+1}>1\) corresponds to a drift to the right side, while a transition ratio \(\rho_{i,i+1}<1\) corresponds to a drift to the left side.

Let
\begin{equation}
    c_{ij} = \begin{cases}
        {\displaystyle\frac{\hat{\rho}_{ij}}{\rho_{ij}}} &\mbox{ if } p_{ij}, \hat{p}_{ij}>0, \\
        1 & \mbox{ otherwise,}
    \end{cases}
\end{equation}
for $i,j\in\{1,\ldots,n\}$ and
\begin{equation}
    c = \max_{i,j\in\{1,\ldots,n\}}\left\{\max\left\{c_{ij}, \frac{1}{c_{ij}}\right\}\right\}.
\end{equation}
The interpretation of \(c\) is that it is the maximum relative difference in drift between the original and the perturbed chain. From Theorem \ref{thm:main} it follows that as long as our feature matrix is full rank and $\gamma < 2/(c+1)$, TD(0) converges.

Next, we demonstrate how Lemma~\ref{lem:diag_dom} can be used to obtain a better bound on \(\gamma\) if more information about the original and perturbed process are available. In particular, we restrict our attention to the simple random walk in which the transition ratios are constant and known for all states.
\begin{mycor} \label{cor:improved_bound}
    Let the original and the perturbed process be simple random walks on $n$ states with
    $\rho_{i,i+1} = \rho$ and $\hat{\rho}_{i,i+1} = \hat{\rho}$ for $i=1,\ldots,n-1$. Let $\delta = \hat{\rho}/\rho$ and
    \begin{equation} \label{eq:improved_bound}
        \gamma < \min\left\{\frac{2(\rho+1)}{\rho + 2 + \delta\rho}, \frac{2(\rho+1)}{2\rho+1+\delta^{-1}}\right\}.
    \end{equation}
    Then $w_t$ converges to $w^*$ with probability 1, where $w^*$ is the unique vector that satisfies \(\PBE(w^*)=0\).
\end{mycor}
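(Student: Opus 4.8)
The plan is to derive the bound~\eqref{eq:improved_bound} directly from Lemma~\ref{lem:diag_dom} and then conclude exactly as in the proof of Theorem~\ref{thm:main}. A simple random walk on $n$ states, together with the reflecting boundary (self-loops at $1$ and $n$) that aperiodicity forces, is reversible, irreducible and aperiodic, and the perturbed chain has the same structure; consequently every part of the Theorem~\ref{thm:main} argument carries over unchanged---verifying conditions (1)--(3), (5) and (6) of Theorem~\ref{thm:stochastic_approx}, checking that $A=\mathbb{E}[A(\hat S,S')]$ and $b=\mathbb{E}[b(\hat S,S')]$, and identifying the limit point as the unique $w^\ast$ with $\PBE(w^\ast)=0$. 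The only place the discount factor enters is the requirement that $A$ be n.d., so the entire content of the corollary is to show that~\eqref{eq:improved_bound} implies condition~\eqref{eq:diag_dom_condition} of Lemma~\ref{lem:diag_dom} for every state $i$.

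For this I would first record the explicit transition probabilities of a simple random walk: the transition-ratio condition, combined with a common up-probability and down-probability at the interior states, forces $p_{i,i+1}=\rho/(1+\rho)$ for $i<n$, $p_{i,i-1}=1/(1+\rho)$ for $i>1$, no interior self-loops, and self-loops $p_{11}=1/(1+\rho)$, $p_{nn}=\rho/(1+\rho)$; the perturbed stationary distribution is geometric, $\hat q_{i\pm1}=\hat q_i\,\hat\rho^{\pm1}$, by~\eqref{eq:oned_steadystate}. In the sum $\frac{1}{\hat q_i}\sum_j \hat q_j p_{ji}$ only the terms $j\in\{i-1,i,i+1\}$ survive, and substituting, with $\delta=\hat\rho/\rho$, would give
\begin{equation*}
\frac{1}{\hat q_i}\sum_{j=1}^n\hat q_j p_{ji}=
\begin{cases}
\dfrac{1+\delta\rho}{1+\rho} & i=1,\\
\dfrac{1}{1+\rho}\left(\dfrac{1}{\delta}+\delta\rho\right) & 2\le i\le n-1,\\
\dfrac{1}{1+\rho}\left(\dfrac{1}{\delta}+\rho\right) & i=n.
\end{cases}
\end{equation*}
Lemma~\ref{lem:diag_dom} then asks for $\gamma<2/\bigl(1+\frac{1}{\hat q_i}\sum_j \hat q_j p_{ji}\bigr)$ for every $i$, and the three cases translate into the thresholds $\tfrac{2(\rho+1)}{\rho+2+\delta\rho}$ (from $i=1$), $\tfrac{2(\rho+1)}{2\rho+1+\delta^{-1}}$ (from $i=n$), and $\tfrac{2(\rho+1)}{1+\rho+\delta^{-1}+\delta\rho}$ (from the interior).

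To finish, I would show that the interior threshold is never the binding one, so that taking the minimum of only the two boundary thresholds---which is exactly~\eqref{eq:improved_bound}---is sufficient. Subtracting the interior denominator from each boundary denominator gives $1-\delta^{-1}$ and $\rho(1-\delta)$ respectively, and at least one of these is nonnegative for every $\delta>0$ ($\delta\ge1$ makes the first nonnegative, $\delta\le1$ the second), so the interior denominator is at most the larger of the two boundary denominators, hence the interior threshold is at least the smaller of the two boundary thresholds. Therefore~\eqref{eq:improved_bound} implies~\eqref{eq:diag_dom_condition} at every state, $A$ is n.d., and convergence with $\PBE(w^\ast)=0$ follows from Theorem~\ref{thm:stochastic_approx} together with the PBE characterization used for Theorem~\ref{thm:main}. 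The only mild subtlety I anticipate is pinning down the boundary self-loops (the transition-ratio condition alone does not determine $p_{11}$ and $p_{nn}$; one uses the ``simple'' structure plus aperiodicity); the rest is mechanical substitution and the comparison of denominators above.
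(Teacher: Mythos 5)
Your proposal is correct and follows essentially the same route as the paper: explicit transition probabilities for the simple random walk, evaluation of $\tfrac{1}{\hat q_i}\sum_j \hat q_j p_{ji}$ separately at state $1$, the interior states, and state $n$, application of Lemma~\ref{lem:diag_dom}, and the observation that the interior threshold is never the binding one (the paper drops it by noting the first case binds for $\delta\geq 1$ and the third for $\delta\leq 1$, which is the same denominator comparison you make), after which the conclusion follows from the Theorem~\ref{thm:main} machinery. Your computed case values and thresholds agree with the paper's \eqref{eq:simple_rw_cases} and the subsequent three-term minimum, so there is no gap.
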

The proof of the corollary is given in Appendix~\ref{app:proof_cor_improved_bound}.
The difference with the proof of Theorem~\ref{thm:main}, in which no structural information besides reversibility is available, is that for this corollary 
we use the random walk structure of the process. The key idea is that Lemma~\ref{lem:diag_dom} imposes a condition for each state and that it turns out that the conditions imposed by boundary states 1 and $n$ are the most stringent. Moreover, these conditions can be loosened w.r.t.\ the bound given in Theorem~\ref{thm:main}, most importantly because \(\hat{\rho}/\rho\) is now known exactly.

We have illustrated the bounds of Corollary~\ref{cor:improved_bound} in Figure~\ref{fig:bound_comparison} for \(\rho=2\) and \(\rho=0.5\) and the limiting behavior for \(\delta\to\infty\) and \(\delta\downarrow 0\). For comparison, we have also depicted the bound of Theorem~\ref{thm:main} by using \(c=\delta^{-1}\) for \(\delta<1\). The figure illustrates that Lemma~\ref{lem:diag_dom} allows for better bounds than given in Theorem~\ref{thm:main} by using more information. If such information is available depends on the application. The value of Theorem~\ref{thm:main} is that it only requires a bound on the difference between the original and the perturbed process.

\section{Conclusion and discussion} \label{sec:conclusion}
In this paper, we have shown convergence of off-policy temporal-difference learning with linear function approximation for reversible Markov chains.
Unlike previous research, which modifies the temporal-difference learning algorithm to establish convergence results, we instead consider the base algorithm.
Key in our analysis is the matrix $A=\Phi^T (\gamma \hat{Q}P - \hat{Q})\Phi$, which is required to be negative definite for the algorithm to converge.
In our setting, we show that this matrix is negative definite under a condition between the discount factor $\gamma$ and the size of the perturbation caused by off-policy learning.

Our work may be extended to more general settings by eliminating various assumptions at the cost of complexity of the proof. 
Think of the assumption of a finite state space, or linear independence of the basis functions, or aperiodicity of the underlying Markov chains, as mentioned by~\citet{tsitsiklis1997analysis} for the on-policy case.

A central assumption in this paper is reversibility of the Markov chains, and extending our results beyond this assumption would improve their applicability.
Reversibility is a technical assumption in our analysis, as it allows us to manipulate the sum $\sum_{j=1}^n \hat{q}_j p_{ji}$ found in Lemma \ref{lem:diag_dom}.
This sum captures the size of the off-policy behaviour in our model.
In Remark \ref{rem:reversible}, we have already shown that our results also hold if the original Markov chain is not reversible, but still satisfies $p_{ij}>0 \Longleftrightarrow p_{ji}>0$ for any $i,j\in\mathcal{S}$, i.e., the reverse transition always exists.
It is worth investigating to extend our result to the case where the perturbed chain is also not reversible, and instead satisfies the assumption that the reverse transition exists.
This extension requires changing our analysis and it seems to lead to a more complex result as we can no longer use the definition of reversibility to rewrite the terms in the sum mentioned above. 
As a result, we would have to add correction terms to account for this, which may in turn change the bound on $\gamma$.

Another extension is to consider the rate of convergence instead of the asymptotic convergence shown in this paper. 
\citet{srikant2019finite} derive finite-time error bounds for a stochastic approximation scheme, which they apply to TD learning with linear function approximation in the on-policy case.
Our model follows a very similar set of assumptions, so we expect that we can extend the results from Theorems~7 and 9 in~\citet{srikant2019finite} to our setting.
These results place bounds on the expected error between the iterates and the TD-fixed point, and the higher moments of this error, respectively.
However, \citet{srikant2019finite} mostly assume a constant step size, while we assume the Robbins-Monro conditions~\citep{robbins1951stochastic}.
The authors do briefly cover diminishing step sizes in Theorem 12, but this result requires technical assumptions on the step size for small values of $t$.
Furthermore, these assumptions involve terms such as the mixing time, which depends on the structure of the Markov chain.
Thus, the main challenge will be to derive corresponding assumptions for our setting.

Finally, we can test the tightness of the bound on $\gamma$ as given in \eqref{eq:gamma_cond1}.
We have performed numerical experiments to find the maximum value of $\gamma$ under which $A$ is still negative definite, which is required for convergence.
We did this for the simple random walk presented in Section \ref{sec:oned_randomwalk}, such that we can also use the tighter bound given in Corollary \ref{cor:improved_bound}.
These experiments showed a significant gap between the numerically required values for $\gamma$ and those obtained from our error bounds.
Part of this gap is likely due to the fact that Lemma \ref{lem:diag_dom} is a sufficient, but not necessary condition for $A$ to be n.d.
Thus, values of $\gamma$ that do not satisfy \eqref{eq:gamma_cond1} can still lead to convergence.
Future steps would be to derive all the terms that make up the gap, and to quantify the differences between the bounds for various settings.

%%%%%%%%%%%%%%%%%%%%%%%%%%%%%%%%%%%%%%%%%%%%%%%%%%%%%%%%%%%%%%%%
%% Bibliography
%%%%%%%%%%%%%%%%%%%%%%%%%%%%%%%%%%%%%%%%%%%%%%%%%%%%%%%%%%%%%%%%
\newcommand{\newblock}{}
\bibliography{references.bib}
\bibliographystyle{unsrtnat}

%% The Appendices part is started with the command \appendix;
%% appendix sections are then done as normal sections
\appendix

\section{Proof of Lemma 1} \label{app:proof_diag_dom} %~\ref{lem:diag_dom}
% \begin{proof}[Proof of Lemma~\ref{lem:diag_dom}]
In this section, we prove Lemma \ref{lem:diag_dom}, which specifies the sufficient condition for $A$ to be negative definite.
In the proof we first show that we only need to consider the inner part of the matrix without the feature vectors, and subsequently analyse the symmetric counterpart of this matrix.
We derive a condition for positive definiteness of the symmetric matrix using diagonal dominance, which implies negative definiteness of matrix $A$.

Let \(B := \hat{Q}-\gamma \hat{Q}P\), so that \(A= \Phi^T (-B) \Phi\). Matrix $A$ is n.d. if 
\begin{equation}
    y^T \Phi^T (-B) \Phi y < 0,
\end{equation}
for all $y \neq 0$.
Since $\Phi$ has full column rank $y \neq 0$ if and only if $x := \Phi y \neq 0$ we
can simplify the above condition to
\begin{equation}
    x^T B x > 0,
\end{equation}
for all $x \neq 0$, i.e., it is sufficient to show that $B$ is positive definite. Now, we rewrite \(x^T B x\) as 
\begin{align}
    x^T B x &= 0.5 x^T (B + B^T) x + 0.5 x^T (B - B^T) x \\
    % &= 0.5 x^T (B + B^T) x + 0.5 x^T B x - 0.5 x^T B^T x, \\
    &= 0.5 x^T (B + B^T) x \\
    &= x^T D x,
\end{align}
where \(D:=\hat Q - 0.5 \gamma (\hat{Q}P + P^T \hat{Q})\). We need to establish that \(D\) is positive definite. It suffices to show that $D$ is Hermitian, strictly diagonal dominant, and has real positive diagonal entries~\citep[see, for instance][Corollary~7.2.3]{horn1985matrix}. 
    Matrix $D$ is Hermitian as its symmetric and real-valued.
    For the diagonal entries of $D$ we have
    \begin{equation}
        d_{ii} = \hat{q}_i -0.5 \gamma (\hat{q}_i p_{ii} + p_{ii} \hat{q}_i) = (1 - \gamma p_{ii}) \hat{q}_i > 0, 
    \end{equation}
    as $\gamma \in (0, 1)$ and $\hat q_i \geq 0$.
    A matrix is strictly diagonally dominant if 
    \begin{equation} \label{eq:diagonal_dom_condition_proof}
        |d_{ii}| - \sum_{j\neq i} |d_{ij}| > 0.
    \end{equation}
    For row $i$ of matrix $D$ we obtain
    \begin{align}
        |d_{ii}| - &\sum_{j\neq i} |d_{ij}| \\
        &= |(1 - \gamma p_{ii}) \hat{q}_i| - \sum_{j\neq i} |0.5 \gamma (\hat{q}_i p_{ij} + \hat{q}_j p_{ji})| \\
        % &= (1 - \gamma p_{ii}) \hat{q}_i - \sum_{j\neq i}0.5 \gamma (\hat{q}_i p_{ij} + \hat{q}_j p_{ji}) \\
        &= (1 - \gamma p_{ii}) \hat{q}_i - 0.5\gamma \left(\sum_{j\neq i} \hat{q}_i p_{ij} + \sum_{j\neq i} \hat{q}_j p_{ji}\right) \\
        % &= \hat{q}_i - 0.5 \gamma \left(p_{ii} \hat{q}_i + \sum_{j\neq i} \hat{q}_i p_{ij} + p_{ii} \hat{q}_i + \sum_{j\neq i} \hat{q}_j p_{ji}\right) \\
        &= \hat{q}_i - 0.5\gamma \left(\sum_{j=1}^n \hat{q}_i p_{ij} + \sum_{j=1}^n \hat{q}_j p_{ji}\right) \\
        % &= \hat{q}_i - 0.5\gamma \left(\hat{q}_i + \sum_{j=1}^n \hat{q}_j p_{ji}\right) \\
        &= \hat{q}_i\left(1-0.5\gamma \left(1 + \frac{1}{\hat{q}_i} \sum_{j=1}^n \hat{q}_j p_{ji} \right)\right). \label{eq:diagonal_dom_rewrite}
    \end{align}
    Now~\eqref{eq:diagonal_dom_condition_proof} follows from our condition on $\gamma$ and~\eqref{eq:diagonal_dom_rewrite}, concluding the proof. \qed  

\begin{myremark}
    Our proof shows that if $\gamma \hat{Q}P - \hat{Q}$ is n.d., then matrix $A$ is also n.d. We note that the reverse is not true.
    Depending on the choice of features $\Phi$, $A$ can still be n.d. if $\gamma \hat{Q}P - \hat{Q}$ is not.
\end{myremark}

\section{Proof of Theorem 2} \label{app:proofthmmain}%~\ref{thm:main}} 
In this section, we prove Theorem \ref{thm:main} for which we use the stochastic approximation result of Theorem \ref{thm:stochastic_approx}.
We first introduce two lemmas that cover the technical conditions of this theorem.
As before, we first consider the stochastic process \(X_t=(\hat S_t, S_t')\) describing the updating of states.
We have that $\mathbb{P}(\hat{S}_{t+1}=i'|\hat{S}_t=i)=\hat{p}_{ii'}$ and $\mathbb{P}(S_t'=j|\hat{S}_t=i)=p_{ij}$.
As a result, we have 
\begin{equation}
    \mathbb{P}(X_{t+1}=(i',j')|X_t=(i,j))=\hat{p}_{ii'}p_{i'j'},
\end{equation}
such that $X_t$ is a Markov chain.
Finally, we define $X=(\hat{S},S')$ as the random variable that is distributed according to the steady-state distribution of $X_t$.
This steady-state distribution equals
\begin{equation}
    \mathbb{P}(X=(i,j))=\hat{q}_i p_{ij}.
\end{equation}

We first provide a lemma to study the steady-state behaviour of matrix $A(X)$ and vector $b(X)$.
It is similar to Lemmas 7 and 8 of~\citet{tsitsiklis1997analysis}.
\begin{mylem} \label{lem:stationary_expectation}
    For $A(\cdot)$ and $b(\cdot)$ as defined in \eqref{eq:transient_a} and \eqref{eq:transient_b} we have
    \begin{equation} \label{eq:matrix_a}
        A := \mathbb{E}[A(X)] = \Phi^T (\gamma \hat{Q}P - \hat{Q})\Phi,
    \end{equation}
    and 
    \begin{equation} \label{eq:vector_b}
        b := \mathbb{E}[b(X)] = \Phi^T \hat{Q} r.
    \end{equation}
\end{mylem}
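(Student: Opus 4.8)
The plan is to evaluate the two expectations directly against the explicit steady-state law $\mathbb{P}(X=(i,j)) = \hat q_i p_{ij}$ obtained just above, and then to recognise the resulting sums of rank-one matrices as the claimed products. Throughout I use the convention fixed in Section~\ref{sec:prelims} that $\phi(s)$ is the transpose of the $s$-th row of $\Phi$, equivalently that the $s$-th column of $\Phi^T$ is $\phi(s)$.

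First I would write, using \eqref{eq:transient_a},
\begin{equation}
    \mathbb{E}[A(X)] = \sum_{i=1}^n \sum_{j=1}^n \hat q_i p_{ij}\, \phi(i)\bigl(\gamma \phi^T(j) - \phi^T(i)\bigr) = \gamma \sum_{i,j} \hat q_i p_{ij}\,\phi(i)\phi^T(j) - \sum_{i,j} \hat q_i p_{ij}\,\phi(i)\phi^T(i).
\end{equation}
The central bookkeeping fact is that for any matrix $M = (m_{ij})\in\mathbb{R}^{n\times n}$ one has $\sum_{i,j} m_{ij}\,\phi(i)\phi^T(j) = \Phi^T M \Phi$; this is verified by comparing the $(k,\ell)$ entries, both of which equal $\sum_{i,j} m_{ij}\,\phi_k(i)\phi_\ell(j)$, and uses only the column convention for $\Phi^T$. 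Applying it with $m_{ij}=\hat q_i p_{ij}$ turns the first sum into $\Phi^T \hat Q P \Phi$. For the second sum I would first marginalise over $j$, using $\sum_j p_{ij}=1$, to get $\sum_i \hat q_i\,\phi(i)\phi^T(i)$, and then apply the same fact with $M = \hat Q$ (the diagonal matrix with entries $\hat q_i$) to obtain $\Phi^T \hat Q \Phi$. Combining the two pieces gives $\mathbb{E}[A(X)] = \gamma\Phi^T\hat Q P\Phi - \Phi^T\hat Q\Phi = \Phi^T(\gamma\hat Q P - \hat Q)\Phi$, which is \eqref{eq:matrix_a}.

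The computation for $b$ is shorter and follows the same template: from \eqref{eq:transient_b},
\begin{equation}
    \mathbb{E}[b(X)] = \sum_{i,j} \hat q_i p_{ij}\,\phi(i) r(i) = \sum_i \hat q_i r(i)\,\phi(i) = \Phi^T \hat Q r,
\end{equation}
where the middle equality again uses $\sum_j p_{ij}=1$ and the last equality is the entrywise identity $(\Phi^T \hat Q r)_k = \sum_i \phi_k(i)\hat q_i r(i)$. There is no genuine obstacle in this lemma: it is a change-of-notation identity. In particular, it is worth noting that, unlike the negative-definiteness argument that follows, this proof uses neither reversibility of the chains nor the assumption $p_{ij}>0 \Leftrightarrow p_{ji}>0$; it only needs that $P$ is stochastic and that $\hat q_i p_{ij}$ is the stationary law of $X$. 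The only thing to be careful about is the placement of transposes, i.e.\ that outer products $\phi(i)\phi^T(j)$ assemble into $\Phi^T M\Phi$ and not $\Phi M\Phi^T$.
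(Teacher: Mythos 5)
Your proof is correct and takes essentially the same route as the paper: evaluate $\mathbb{E}[A(X)]$ and $\mathbb{E}[b(X)]$ under the stationary law $\mathbb{P}(X=(i,j))=\hat q_i p_{ij}$, use $\sum_j p_{ij}=1$ for the terms not involving $S'$, and identify the resulting sums with $\Phi^T\hat Q P\Phi$, $\Phi^T\hat Q\Phi$ and $\Phi^T\hat Q r$. The only cosmetic difference is that the paper identifies these matrices via the bilinear forms $\mathbb{E}[V(\hat S)\bar V(S')]$ with $V=\Phi w$, $\bar V=\Phi\bar w$ and arbitrary $w,\bar w$, whereas you compare $(k,\ell)$ entries directly; your observation that reversibility is not needed here is also accurate.
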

\begin{proof}
    For any $V, \Bar{V} \in \mathbb{R}^n$ we have
    \begin{align}
        \mathbb{E}[V(\hat{S}) \Bar{V}(S')] &= \sum_{i\in\mathcal{S}} \sum_{j\in\mathcal{S}} \hat{q}_i p_{ij} V(i) \Bar{V}(j)
        % \sum_{i\in\mathcal{S}} \sum_{j\in\mathcal{S}} \mathbb{P}(\hat{S}=i, S'=j) V(i)\Bar{V}(j) \\       
        % &= \sum_{i\in\mathcal{S}} \sum_{j\in\mathcal{S}} \hat{q}_i p_{ij} V(i) \Bar{V}(j) \\
        % &= \sum_{i\in\mathcal{S}} \hat{q}_i V(i) [P\Bar{V}](i) \\
        = V^T \hat{Q}P \Bar{V}.
    \end{align}
    By choosing $V=\Phi w$ and $\Bar{V} = \Phi \Bar{w}$ we get
    \begin{equation}
        \mathbb{E}[w^T\phi(\hat{S}) \phi^T(S')\Bar{w}] = w^T\Phi^T\hat{Q}P\Phi \Bar{w}.
    \end{equation}
    As we can choose vectors $w$ and $\Bar{w}$ as we like, we must also have
    \begin{equation}
        \mathbb{E}[\phi(\hat{S}) \phi^T(S')] = \Phi^T\hat{Q}P\Phi.
    \end{equation}
    By analogy, we find
    \begin{equation}
        \mathbb{E}[\phi(\hat{S}) \phi^T(\hat{S})] = \Phi^T\hat{Q}\Phi,
    \end{equation}
    where now we do not consider the successor state $S'$ so matrix $P$ disappears.
    We use both equations to obtain
    \begin{align}
        \mathbb{E}[A(\hat S, S')]
        &= \mathbb{E}[\phi(\hat{S})\left(\gamma \phi^T(S') - \phi^T(\hat{S})\right)] \\
        &= \gamma \mathbb{E}[\phi(\hat{S})\phi^T(S')]  - \mathbb{E}[\phi(\hat{S}) \phi^T(\hat{S})] \\
        &= \Phi^T (\gamma \hat{Q} P - Q)\Phi.
    \end{align}
    Using a similar analysis, we also have 
    \begin{equation}
        \mathbb{E}[b(\hat S, S')] = \mathbb{E}[\phi(\hat{S})r(\hat{S})] = \Phi^T \hat{Q} r.
    \end{equation} %\qed
\end{proof}

% \section{Proof of Lemma~\ref{lem:cond56}}
\begin{mylem} \label{lem:cond56}
    Conditions 5 and 6 of Theorem \ref{thm:stochastic_approx} are satisfied for Markov chain $X_t$.
    \end{mylem}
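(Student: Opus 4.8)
The plan is to exploit the finiteness of the state space, which renders both conditions essentially automatic once one establishes that $X_t$ converges to stationarity at a geometric rate, uniformly in the starting state. First I would note that $X_t$ lives on the finite set $\mathcal{X} := \{(i,j)\in\mathcal{S}\times\mathcal{S} : p_{ij}>0\}$, with transition probabilities $\mathbb{P}(X_{t+1}=(i',j')\mid X_t=(i,j)) = \hat p_{ii'}p_{i'j'}$ and stationary distribution $\pi(i,j) = \hat q_i p_{ij}$ as already recorded in the excerpt. I would then verify that this chain is irreducible and aperiodic on $\mathcal{X}$: the first coordinate of $X_t$ is itself a copy of the Markov chain with transition matrix $\hat P$, and the second coordinate is, conditionally, one step of $P$ from the first coordinate; hence from any $(i,j)$ one reaches any $(i',j')\in\mathcal{X}$ by using irreducibility of $\hat P$ to route the first coordinate to $i'$ and one further step to place the second coordinate on $j'$, while $\mathbb{P}(X_t=(i,j)\mid X_0=(i,j)) = \hat p^{(t)}_{ii}\,p_{ij} > 0$ for all large $t$ by aperiodicity of $\hat P$. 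A finite, irreducible, aperiodic chain is geometrically ergodic, so there are constants $C>0$ and $\lambda\in(0,1)$, independent of $x$, with $\|\mathbb{P}(X_t=\cdot\mid X_0=x)-\pi\|_{\mathrm{TV}} \le C\lambda^t$ for all $x\in\mathcal{X}$.

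Next I would take $h\equiv 1$ --- any bounded function works since $\mathcal{X}$ is finite --- and verify Condition 5. Because $\Phi$ and $r$ are fixed and $\mathcal{S}$ is finite, the functions $A(\cdot)$ and $b(\cdot)$ from \eqref{eq:transient_a} and \eqref{eq:transient_b} are bounded, say $\|A(x)\|\le K_A$ and $\|b(x)\|\le K_b$ on $\mathcal{X}$. Since $A = \mathbb{E}[A(X)]$ by Lemma~\ref{lem:stationary_expectation}, writing $\mathbb{E}[A(X_t)\mid X_0=x] - A = \sum_{x'\in\mathcal{X}}(\mathbb{P}(X_t=x'\mid X_0=x)-\pi(x'))A(x')$ and bounding by the total variation distance gives $\|\mathbb{E}[A(X_t)\mid X_0=x]-A\| \le 2K_A C\lambda^t$, and likewise for $b$. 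Summing the geometric series, $\sum_{t=0}^\infty\|\mathbb{E}[A(X_t)\mid X_0=x]-A\| \le 2K_A C/(1-\lambda)$ with the analogous bound for $b$; taking $M$ to be the maximum of these two constants and $g$ arbitrary (say $g=2$) yields Condition 5, since the right-hand side $M(1+h^g(x))=2M$ dominates both sums for every $x$.

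Condition 6 is then immediate: with $h\equiv 1$ we have $\mathbb{E}[h^g(X_t)\mid X_0=x] = 1 \le 2 = \mu_g(1+h^g(x))$ for any $g>1$ with $\mu_g=1$. I do not expect a serious obstacle in this lemma; the only place needing a little care is the verification that the product chain $X_t$ is irreducible and aperiodic on $\mathcal{X}$, which is what guarantees the uniform geometric mixing rate $\lambda$ and hence the summability of the conditional-expectation errors. Everything else reduces to boundedness of $A(\cdot)$ and $b(\cdot)$ on a finite set, together with the identification of their stationary means already provided by Lemma~\ref{lem:stationary_expectation}.
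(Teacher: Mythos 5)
Your proposal is correct and rests on the same basic mechanism as the paper's proof: geometric ergodicity of a finite, irreducible, aperiodic chain makes the conditional-expectation errors decay geometrically and hence summable, and the constant choice $h\equiv 1$ makes Condition 6 trivial. The execution differs slightly. You establish irreducibility and aperiodicity of the product chain $X_t=(\hat S_t,S_t')$ and then bound $\|\mathbb{E}[A(X_t)\mid X_0=x]-A\|$ generically by total variation distance times $\max_{x'}\|A(x')\|$, which is clean and avoids any matrix algebra but requires the (short) verification that the product chain mixes. The paper instead exploits that $A(X_t)$ and $b(X_t)$ have conditional expectations depending only on the law of $\hat S_t$: it invokes geometric convergence of the marginal chain $\hat S_t$ alone, writes $\mathbb{E}[A(X_t)\mid X_0]=\Phi^T\hat Q_t(\gamma P-I)\Phi$ and $\mathbb{E}[b(X_t)\mid X_0]=\Phi^T\hat Q_t r$ by the same computation as in Lemma~\ref{lem:stationary_expectation}, and bounds the differences via $\|\hat Q_t-\hat Q\|$, yielding explicit constants in terms of $K$, $\max_k\|\phi_k\|$, $\|P\|$ and $\|r\|$. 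Your route is marginally more general (it would apply verbatim to any bounded $A(\cdot),b(\cdot)$ on the product space), while the paper's yields concrete constants and sidesteps the ergodicity check for the joint chain; both are valid proofs of the lemma.
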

\begin{proof}
    The perturbed Markov chain $\{\hat{S}_t \ | \ t=0,1,\ldots\}$ is irreducible, aperiodic, and finite.
    As a result, we know that there exists scalars $O > 0$ and $\beta \in (0,1)$ such that 
    \begin{equation}
        |\mathbb{P}(\hat{S}_t=i|\hat{S}_0)-\hat{q}_i| \leq O \beta^t.
    \end{equation}
    for all $\hat{S}_0\in\mathcal{S}$.
    Now fix any $\hat{S}_0\in\mathcal{S}$ and define diagonal matrices $\hat{Q}_t$ for $t=0,1,\ldots$, with $\mathbb{P}(\hat{S}_t=i|\hat{S}_0)$ as the $i$th diagonal element.
    As a result, we have 
    \begin{equation}
        ||\hat{Q}_t-\hat{Q}|| \leq O \beta^t.
    \end{equation}
    We work out the expectations of $A(\cdot)$ and $b(\cdot)$ with respect to initial state $\hat{S}_0$.
    From \eqref{eq:transient_a} and \eqref{eq:transient_b} we have
    \begin{align}
        \mathbb{E}[A&(X_t)|X_0] = \mathbb{E}[\phi(\hat{S}_t)(\gamma \phi^T(S_t') - \phi^T(\hat{S}_t))] \\
        &= \gamma \mathbb{E}[\phi(\hat{S}_t)\phi^T(S_t')|X_0] - \mathbb{E}[\phi(\hat{S}_t)\phi^T(\hat{S}_t)|X_0],
    \end{align}
    and
    \begin{equation}
        \mathbb{E}[b(X_t)|X_0] = \mathbb{E}[\phi(\hat{S}_t)r(\hat{S}_t)|X_0].
    \end{equation}
    In a similar way to Lemma \ref{lem:stationary_expectation}, we can work out the expectations of the equations above.
    For any $V, \Bar{V} \in \mathbb{R}^n$, we have
    \begin{align}
        \mathbb{E}[V(\hat{S}_t)\Bar{V}(S_t')&|\hat{S}_0] \\
        &= \sum_{i\in\mathcal{S}}\sum_{j\in\mathcal{S}}\mathbb{P}(\hat{S}_t=i, S_t'= j|\hat{S}_0) V(i) \Bar{V}(j) \\
        % &= \sum_{i\in\mathcal{S}}\sum_{j\in\mathcal{S}}\mathbb{P}(\hat{S}_t=i|S_0) \mathbb{P}(S_t'=j|\hat{S}_t=i) V(i) \Bar{V}(j) \\
        &= \sum_{i\in\mathcal{S}}\mathbb{P}(\hat{S}_t=i|\hat{S}_0)V(i) \sum_{j\in\mathcal{S}} p_{ij} \Bar{V}(j) \\
        % &= \sum_{i\in\mathcal{S}}\mathbb{P}(\hat{S}_t=i|S_0)V(i) P[\Bar{V}](i) \\ 
        &= V^T \hat{Q}_t P \Bar{V}.
    \end{align}
    By specifying $V=\Phi w$ and $\Bar{V}=\Phi \Bar{w}$ we get
    \begin{equation}
        \mathbb{E}[w^T \phi(\hat{S}_t)\phi^T(S_t')\Bar{w}|\hat{S}_0] = w^T \Phi^T \hat{Q}_t P \Phi \Bar{w}.
    \end{equation}
    and since the choice of $w$ and $\Bar{w}$ is arbitrary we have 
    \begin{equation} \label{eq:transient1}
        \mathbb{E}[\phi(\hat{S}_t)\phi^T(S_t')|\hat{S}_0] = \Phi^T \hat{Q}_t P \Phi.
    \end{equation}
    Similarly, it follows that
    \begin{equation} \label{eq:transient2}
        \mathbb{E}[\phi(\hat{S}_t)\phi^T(\hat{S}_t)|\hat{S}_0] = \Phi^T \hat{Q}_t \Phi,
    \end{equation}
    and
    \begin{equation} \label{eq:transient3}
        \mathbb{E}[\phi(\hat{S}_t)r(\hat{S}_t)|\hat{S}_0] = \Phi^T \hat{Q}_t r.
    \end{equation}
    To prove the first part of Condition 5 we use \eqref{eq:matrix_a}, \eqref{eq:transient1}, and \eqref{eq:transient2} such that 
    \begin{align}
        \sum_{t=0}^\infty ||\mathbb{E}[&A(X_t)|X_0] - A|| \\
        &= \sum_{t=0}^\infty ||\Phi^T (\hat{Q}_t - \hat{Q})(\gamma P - I) \Phi|| \\
        &\leq \sum_{t=0}^\infty K^2 \max_{k,j}|\phi^T_k (\hat{Q}_t - \hat{Q})(\gamma P - I) \phi_j| \\
        &\leq K^2 \max_k ||\phi_k||^2 \sum_{t=0}^\infty ||(\hat{Q}_t - \hat{Q})(\gamma P - I)|| \\
        &\leq K^2 \max_k ||\phi_k||^2 (\gamma G + 1)\sum_{t=0}^\infty ||\hat{Q}_t - \hat{Q}|| \\
        &\leq K^2 \max_k ||\phi_k||^2 (\gamma G + 1) \frac{O}{1-\beta},
    \end{align}
    where $G=||P||$.
    For the second part of condition 5 we use \eqref{eq:vector_b} and \eqref{eq:transient3} to obtain
    \begin{align}
        \sum_{t=0}^\infty ||\mathbb{E}[b(X_t)|X_0] &- b|| = \sum_{t=0}^\infty ||\Phi^T (\hat{Q}_t - \hat{Q}) r|| \\
        &\leq \sum_{t=0}^\infty K \max_k |\phi^T_k (\hat{Q}_t - \hat{Q}) r| \\
        &\leq K \max_k ||\phi_k|| \sum_{t=0}^\infty ||(\hat{Q}_t - \hat{Q}) r|| \\
        &\leq K \max_k ||\phi_k|| R \sum_{t=0}^\infty ||\hat{Q}_t - \hat{Q}|| \\
        &\leq K \max_k ||\phi_k|| R \frac{O}{1-\beta},
    \end{align}
    where $R=||r||$.
    Then, Condition 5 is satisfied by setting $M=K \max_k ||\phi_k|| \frac{O}{1-\beta} \max\{K \max_k ||\phi_k|| (\gamma G + 1), R\}$, $g=1$, and $h(x)=1$ for all $x$.
    As $h$ is a constant function, Condition 6 is also trivially satisfied by for example setting $\mu_g=1$ for all $g$. %\qed
    \end{proof}

    % \subsection{Proof of Theorem \ref{thm:main}} \label{sec:proof1}
    \begin{proof}[Proof of Theorem \ref{thm:main}]
    % To prove convergence of Theorem \ref{thm:main},
    We apply Theorem \ref{thm:stochastic_approx}. Therefore, we need to check all conditions.
    Condition 1 directly follows from \eqref{eq:step_size}.
    Furthermore, we observed that $X_t$ is indeed a Markov chain, while Condition 3 is proven in Lemma \ref{lem:stationary_expectation}.
    Finally, Conditions 5 and 6 follow from Lemma \ref{lem:cond56}.
    We are left with proving that $A$ is negative definite.

    Both the original and perturbed chains are reversible, such that we have $q_j = \rho_{ij} q_i$ and $\hat{q}_j = \hat{\rho}_{ij} \hat{q}_i$
    for $i,j\in\mathcal{S}$, with transition ratios as defined in \eqref{eq:rate} for the original and perturbed chains.
    Then, we define 
    \begin{equation}
        c_{ij} = \frac{\hat{\rho}_{ij}}{\rho_{ij}}
    \end{equation}
    for any pair of states $i,j\in \mathcal{S}$ that satisfy $p_{ij}>0$.
    By reversibility we then also have $p_{ji}, \hat{p}_{ij}, \hat{p}_{ji}>0$ so $c_{ij}$ is well defined.
    We set
    \begin{equation}
        c = \max_{i,j\in\mathcal{S}}\left\{\max\left\{c_{ij}, \frac{1}{c_{ij}}\right\}\right\}.
    \end{equation}
    As a result, we obtain
    \begin{align} \label{eq:proof_balance}
        \sum_{j=1}^n \hat{q}_j p_{ji} &= \sum_{j=1}^n \hat{q}_i \hat{\rho}_{ij} p_{ji} = \sum_{j=1}^n \hat{q}_i c_{ij} \rho_{ij} p_{ji} \\
        &= \sum_{j=1}^n \hat{q}_i c_{ij} p_{ij} = \hat{q}_i \sum_{j=1}^n c_{ij} p_{ij} \\
        &\leq \hat{q}_i c \sum_{j=1}^n p_{ij} = \hat{q}_i c.
    \end{align}
    Finally, we have
    \begin{equation}
        \frac{2}{1+\frac{1}{\hat{q}_i}\sum_{j=1}^n \hat{q}_j p_{ji}} \geq \frac{2}{1+\frac{1}{\hat{q}_i}c \hat{q}_i}  = \frac{2}{c+1} > \gamma,
    \end{equation}
    which proves that $A$ is n.d. by Lemma \ref{lem:diag_dom}.
    As a result, we can apply Theorem \ref{thm:stochastic_approx} to show that $w_t$ converges to $w^*$ and 
    \begin{align}
        &A w^* + b = 0, \\
        &\Phi^T \hat{Q} (r + \gamma P \Phi w^* - \Phi w^*) = 0,
    \end{align}
    where we use \eqref{eq:matrix_a} and \eqref{eq:vector_b} for $A$ and $b$.
    We use the latter of these two equation such that the point of converge $w^*$ satisfies
    \begin{align}
        \text{PBE}&(w^*)= ||\Pi_{\hat{Q}}\mathcal{B}(\Phi w^*) - \Phi w^*||_{\hat{Q}} \\
        &= ||\Phi\left(\Phi^T \hat{Q} \Phi\right)^{-1}\Phi^T \hat{Q}\left(r+\gamma P \Phi w^*\right) - \Phi w^*||_{\hat{Q}} \\
        &= ||\Phi\left(\Phi^T \hat{Q} \Phi\right)^{-1}\Phi^T \hat{Q} \Phi w^* - \Phi w^*||_{\hat{Q}} \\
        &= ||\Phi w^* - \Phi w^*||_{\hat{Q}} = 0.
    \end{align}
    This proves Theorem \ref{thm:main}. 
\end{proof}

\section{Proof of Corollary 1} \label{app:proof_cor_improved_bound} %~\ref{cor:improved_bound}} 
This section contains the proof of Corollary~\ref{cor:improved_bound} for the improved bound on convergence for one-dimensional random walks with constant transition ratios $\rho$ and $\hat{\rho}$. 
% \begin{proof}
Given these transition ratios, we have $p_{i,i+1}=p_{nn}=\frac{\rho}{\rho+1}$ for $i=1,\ldots,n-1$.
and $p_{i-1,i}=p_{11}=\frac{1}{\rho+1}$ for $i=2,\ldots,n$, while all other transitions have probability 0.
If we pick up the proof of Theorem \ref{thm:main} from~\eqref{eq:proof_balance} we obtain
\begin{align} \label{eq:simple_rw_cases}
    \sum_{j=1}^n \hat{q}_j p_{ji} &= \hat{q}_i \sum_{i=1}^n c_{ij} p_{ij} \\
    &= \begin{cases}
        \hat{q}_1 (\frac{1}{\rho+1} + \delta\frac{\rho}{\rho+1}) &\mbox{ if } i=1, \\
        \hat{q}_i (\frac{1}{\delta}\frac{1}{\rho+1} + \delta\frac{\rho}{\rho+1}) &\mbox{ if } i=2,\ldots,n-1, \\
        \hat{q}_n (\frac{1}{\delta}\frac{1}{\rho+1} + \frac{\rho}{\rho+1}) &\mbox{ if } i=n,
    \end{cases}
\end{align}
where we use that $c_{i,i+1}=\delta=\frac{\hat{\rho}}{\rho}$, $c_{i+1,i}=\frac{1}{\delta}$ for $i=1,\ldots,n-1$, and $\rho_{11}=\hat\rho_{11}=c_{11}=\rho_{nn}=\hat\rho_{nn}=c_{nn}=1$.
To prove convergence, we apply Lemma \ref{lem:diag_dom} for which we need to satisfy \eqref{eq:diag_dom_condition} for all $i=1,\dots,n$.
For the simple random walk with constant transition ratios this condition then comes down to
\begin{equation}
    \gamma < \min\left\{\frac{2(\rho+1)}{\rho + 2 + \delta\rho}, \frac{2(\rho+1)}{\rho+1+\frac{1}{\delta} + \delta\rho}, \frac{2(\rho+1)}{2\rho+1+\frac{1}{\delta}}\right\},
\end{equation}
by filling in each of the cases of \eqref{eq:simple_rw_cases}. It is easy to see that for \(\delta\geq 1\) the first case is the smallest, and that for \(\delta\leq 1\) the third case is the smallest. Therefore, we can leave out the second case from the minimization. \qed

\end{document}